\documentclass[12pt]{article}

\usepackage{amsmath,amssymb,amsthm,mathtools,mathrsfs}
\usepackage[authoryear,round]{natbib}
\usepackage{hyperref}
\usepackage{geometry}
\usepackage{setspace}
\usepackage{booktabs}
\usepackage{tikz}
\usetikzlibrary{shapes,arrows.meta,positioning,calc,decorations.pathreplacing}
\usepackage{xcolor}
\usepackage{pgfplots}
\pgfplotsset{compat=1.18}

\theoremstyle{plain}
\newtheorem{theorem}{Theorem}[section]
\newtheorem{lemma}[theorem]{Lemma}
\newtheorem{proposition}[theorem]{Proposition}
\newtheorem{corollary}[theorem]{Corollary}
\theoremstyle{definition}
\newtheorem{definition}{Definition}[section]
\newtheorem{remark}{Remark}[section]
\newtheorem{assumption}{Assumption}[section]

\numberwithin{equation}{section}

\newcommand{\MHA}{\mathrm{MHA}}
\newcommand{\Attn}{\mathrm{Attn}}

\newcommand{\MSE}{\mathrm{MSE}}
\newcommand{\Cov}{\mathrm{Cov}}
\newcommand{\Var}{\mathrm{Var}}

\newcommand{\HDI}{\mathrm{HDI}}
\newcommand{\tr}{\mathrm{tr}}

\newcommand{\Range}{\mathrm{Range}}
\newcommand{\EE}{\mathbb{E}}

\newcommand{\RR}{\mathbb{R}}

\newcommand{\cP}{\mathcal{P}}

\newcommand{\bq}{\mathbf{q}}
\newcommand{\bk}{\mathbf{k}}

\newcommand{\bK}{\mathbf{K}}

\newcommand{\bW}{\mathbf{W}}
\newcommand{\bG}{\mathbf{G}}

\newcommand{\bO}{\mathbf{0}}
\newcommand{\WQ}[1]{\bW^{#1}_Q}
\newcommand{\WK}[1]{\bW^{#1}_K}
\newcommand{\WV}[1]{\bW^{#1}_V}

\title{\textbf{Multi-Head Attention as Ensemble\\
  Nadaraya-Watson Estimation:\\[0.3em]
  Variance Reduction, Decorrelation,\\
  and Optimal Head Diversity}}

\author{{\bf Ernest Fokou\'e}
\\
  \small School of Mathematics and Statistics, College of Science\\
  \small Rochester Institute of Technology\\
  \small Rochester, New York 14623, USA\\
  \small \texttt{epfeqa@rit.edu}}

\date{}

\begin{document}
\maketitle

%% ==========================================================
\begin{abstract}
We develop a rigorous statistical theory of multi-head attention
(MHA) as an ensemble of Nadaraya-Watson (NW) kernel regression
estimators. Building on the algebraic identity between
single-head softmax attention and the NW estimator, we prove
that MHA is a \emph{structured ensemble} of $H$ NW estimators,
each operating in a distinct learned projection subspace of the
key space. We derive an explicit
\textbf{Bias-Variance-Covariance} decomposition of the MHA
mean squared error (MSE), showing that variance reduction in
MHA depends not merely on the number of heads $H$ but
fundamentally on the \emph{decorrelation} of head outputs.

The degree of decorrelation is governed by the principal angles
between the learned projection subspaces
$\Range(\WK{h})$ and $\Range(\WK{h'})$: orthogonal projections
yield maximum variance reduction; aligned projections yield
none. We introduce the \textbf{Head Diversity Index} (HDI), a
computable spectral measure of inter-head decorrelation, and
prove that the MSE of MHA is monotonically decreasing in HDI.
This provides the first rigorous theoretical explanation for
the empirically observed tendency of attention heads to
specialize in distinct linguistic phenomena.

Under a total-dimension budget constraint $H\cdot d_k = D$,
we solve the optimal \emph{head-dimension allocation problem}:
deriving the MSE-minimizing pair $(H^*, d_k^*)$ as a function
of the data distribution and regression function smoothness.
The solution reveals a fundamental bias-variance trade-off in
architecture design: more heads reduce variance but increase
per-head bias (through bandwidth enlargement); the optimum
balances these competing effects.

Our framework unifies and extends three strands of prior work:
the NW kernel regression theory of single-head attention
\citep{shen2025transformers}, the general weighting theory for
ensemble learning \citep{fokoue2025weighting}, and the
decorrelation-variance-reduction isomorphism between biological
and computational ensembles \citep{fokoue2026ants}. Multi-head
attention, we show, is the Transformer's instantiation of
the universal principle: \emph{randomized identical agents
$+$ diversity-enforcing mechanisms $\to$ emergent optimality}.

\bigskip
\noindent\textbf{Keywords:} Multi-head attention; Nadaraya-Watson
estimation; Ensemble learning; Variance reduction; Decorrelation;
Head diversity; Bias-variance trade-off; Kernel regression;
Transformer; Principal angles; Spectral analysis;
Optimal architecture.

\bigskip
\noindent\textbf{AMS 2020 subject classifications:}
Primary 62G08, 62H12; Secondary 68T07, 15A42.
\end{abstract}

\newpage

%% ==========================================================
\section{Introduction}
\label{sec:intro}
%% ==========================================================

\subsection{Background and Motivation}

The Transformer architecture \citep{vaswani2017} introduced
multi-head attention (MHA) as a mechanism for allowing the
model to ``jointly attend to information from different
representation subspaces at different positions.'' The
empirical benefits of multiple heads over a single head of
the same total dimension are well established: multi-head
models consistently outperform single-head baselines. Yet
the \emph{statistical explanation} of why multiple heads help
--- and under what conditions they help most --- has remained
elusive.

A natural framework for answering this question is ensemble
learning theory. If each attention head is an estimator of
the same target function, MHA is an ensemble of those
estimators, and the classical theory of ensemble learning
predicts variance reduction through averaging. But this
naive argument is incomplete in two important ways.

First, the existing statistical theory of single-head
attention as Nadaraya-Watson kernel regression
\citep{nadaraya1964,shen2025transformers} establishes that
each head is already a \emph{consistent} nonparametric
estimator. NW estimators are intrinsically low-variance
(they are kernel smoothers, not trees); the classical
variance-reduction justification for ensembles, which was
designed for high-variance base learners like decision trees,
does not apply. This is precisely the setting studied in
\citet{fokoue2025weighting}: ensembles of low-variance RKHS
estimators, where the benefit of aggregation lies not in
variance reduction per se but in the \emph{reshaping of
approximation geometry and spectral complexity}.

Second, and crucially, the variance reduction achieved by
MHA depends entirely on whether the heads are
\emph{decorrelated}. Two identical attention heads provide
zero variance reduction; maximally diverse heads provide the
full $1/H$ reduction. The mechanism by which diversity is
enforced in MHA --- the learned projection matrices
$\WK{h}$, $\WQ{h}$, $\WV{h}$ --- is mathematically
isomorphic to the random feature subsampling in Random
Forests that was shown in \citet{fokoue2026ants} to be
the computational analogue of pheromone-mediated
specialization in ant colonies. The universal principle
identified in that work --- \emph{randomized identical agents
$+$ diversity-enforcing mechanisms $\to$ emergent optimality}
--- is precisely realized in the Transformer's multi-head
architecture.

The present paper develops this observation into a rigorous
statistical theory. Our contributions are as follows.

\subsection{Summary of Contributions}

\begin{enumerate}
  \item \textbf{The Ensemble NW Decomposition
    (Theorem~\ref{thm:ensemble_decomp}):} We prove that MHA
    output is a weighted ensemble of $H$ NW estimators, each
    in a projected key subspace, and derive an exact
    Bias-Variance-Covariance decomposition of the MSE. The
    decomposition shows that the inter-head covariance terms
    --- not just the individual head variances --- are the
    critical quantities governing MHA's statistical efficiency.

  \item \textbf{The Head Diversity Theorem
    (Theorem~\ref{thm:diversity}):} We introduce the Head
    Diversity Index (HDI) and prove that the MHA MSE is
    monotonically decreasing in HDI. The HDI is computable
    from the singular values of the cross-Gram matrix
    $\bG_{hh'} = (\WK{h})^\top\WK{h'}/d_k$ and provides a
    scalar summary of the decorrelation achieved by the
    learned projections.

  \item \textbf{The Decorrelation-Optimality Theorem
    (Theorem~\ref{thm:decorrelation}):} We characterize the
    MSE-minimizing projection matrices as those satisfying an
    approximate orthogonality condition:
    $(\WK{h})^\top\WK{h'} \approx \bO$ for $h\neq h'$.
    This provides the first rigorous theoretical justification
    for empirical observations of head specialization in
    trained Transformers.

  \item \textbf{The Optimal Architecture Theorem
    (Theorem~\ref{thm:optimal_arch}):} Under a budget
    constraint $H\cdot d_k = D$, we solve for the MSE-minimizing
    pair $(H^*, d_k^*)$, deriving a new
    \emph{architectural scaling law} from first principles of
    nonparametric estimation theory.

  \item \textbf{The Spectral Head Diversity Bound
    (Theorem~\ref{thm:spectral_bound}):} Using the general
    weighting framework of \citet{fokoue2025weighting}, we
    prove that geometrically-decaying head weights (rather than
    uniform $1/H$ averaging) can achieve faster MSE decay
    when heads are ordered by their individual NW consistency
    rates.
\end{enumerate}

\subsection{Relation to Prior Work}

The connection between single-head attention and NW kernel
regression has been established at the algebraic level by
several authors \citep{tsai2019,katharopoulos2020}, with
rigorous statistical consistency results proven recently by
\citet{shen2025transformers} (optimal rates for local constant
NW) and \citet{ching2026minimax} (minimax-optimal local
polynomial regression via Transformers). The present paper is
the \emph{first} to study multi-head attention as an ensemble
from the perspective of statistical estimation theory. None of
the prior papers address the inter-head covariance structure,
the Head Diversity Index, or the optimal architecture problem.

The general weighting theory of \citet{fokoue2025weighting}
and the decorrelation isomorphism of \citet{fokoue2026ants}
provide the mathematical infrastructure for our proofs. The
present paper is their natural extension to the Transformer
setting, completing a trilogy: bagging/Random Forests
(Part I of the series), boosting \citep{fokoue2026ants2},
and now multi-head attention as a third manifestation of
the same universal ensemble principle.

%% ==========================================================
\section{Setup: Single-Head Attention as a NW Estimator}
\label{sec:single_head}
%% ==========================================================

We begin by fixing notation and recalling the NW identity
for single-head attention, which is the foundation on which
the multi-head theory is built.

\subsection{Notation and the NW Identity}

Let $(\mathbf{x}_i, y_i)_{i=1}^n$ be data in
$\RR^p\times\RR$. A single attention head with projection
matrices $\WQ{}\in\RR^{p\times d_k}$,
$\WK{}\in\RR^{p\times d_k}$, $\WV{}\in\RR^{p\times 1}$
maps a query token $\mathbf{x}\in\RR^p$ to:
\begin{align}
  \bq &= \WQ{}\mathbf{x}, \quad
  \bk_i = \WK{}\mathbf{x}_i, \quad
  v_i = \WV{}\mathbf{x}_i,\\
  \Attn(\bq, \bK, \mathbf{v}) &=
  \sum_{i=1}^n w_i(\bq)\, v_i, \quad
  w_i(\bq) = \frac{\exp(\bq^\top\bk_i/\sqrt{d_k})}
  {\sum_{j=1}^n\exp(\bq^\top\bk_j/\sqrt{d_k})}.
  \label{eq:single_head}
\end{align}

\begin{proposition}[NW Identity, \citealt{nadaraya1964,fokoue2026nwstat}]
\label{prop:nw_identity}
Fix projection matrices $\WK{}$, $\WQ{}$, $\WV{}$. The
single-head attention output~\eqref{eq:single_head} is
algebraically identical to the Nadaraya-Watson estimator
of $\EE[Y\mid X=\mathbf{x}]$ using projected keys
$\bk_i=\WK{}\mathbf{x}_i$ and values $v_i=\WV{}\mathbf{x}_i$,
under the exponential kernel
$K^{(d_k)}(\bq,\bk) = \exp(\bq^\top\bk/\sqrt{d_k})$.
\end{proposition}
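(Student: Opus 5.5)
The plan is a direct algebraic verification: write down the Nadaraya-Watson estimator with a general nonnegative kernel, specialize the kernel to $K^{(d_k)}$, and check term by term that it coincides with the right-hand side of \eqref{eq:single_head}.

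First I will recall the general NW estimator of $\EE[Y\mid X=\mathbf{x}]$. Given query point $\bq$, design points $\bk_1,\dots,\bk_n$, responses $v_1,\dots,v_n$, and a kernel $K:\RR^{d_k}\times\RR^{d_k}\to(0,\infty)$, it is
\[
  \widehat m(\bq)=\frac{\sum_{i=1}^n K(\bq,\bk_i)\,v_i}{\sum_{j=1}^n K(\bq,\bk_j)} .
\]
Second, I will substitute $K=K^{(d_k)}$, together with $\bq=\WQ{}\mathbf{x}$, $\bk_i=\WK{}\mathbf{x}_i$ and $v_i=\WV{}\mathbf{x}_i$. Pulling the common denominator inside the sum gives $\widehat m(\bq)=\sum_i \widetilde w_i(\bq)\,v_i$ with
\[
  \widetilde w_i(\bq)=\frac{\exp(\bq^\top\bk_i/\sqrt{d_k})}{\sum_{j}\exp(\bq^\top\bk_j/\sqrt{d_k})} .
\]
This is exactly the softmax weight $w_i(\bq)$ in \eqref{eq:single_head}, so $\widehat m(\bq)=\Attn(\bq,\bK,\mathbf{v})$ identically in $\mathbf{x}$ and in the data.

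Third, I will note the facts that make the NW reading legitimate rather than purely formal:
\begin{itemize}
  \item $K^{(d_k)}>0$ everywhere, so the denominator never vanishes and the estimator is well defined for every query.
  \item The weights are nonnegative and sum to one, so the output is a local weighted average, which is the defining structure of NW.
\end{itemize}
I will also remark that the projections are fixed, so no randomness or estimation enters the identity. The identity holds pointwise, as an equality of functions of the data.

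The only point requiring care, and the place I expect the main obstacle, is interpretive. $K^{(d_k)}$ is not a translation-invariant smoothing kernel of the form $d_k^{-1/2}\kappa\bigl((\bq-\bk)/h\bigr)$. I will handle this by the standard rewriting
\[
  \bq^\top\bk = \tfrac12\bigl(\|\bq\|^2+\|\bk\|^2-\|\bq-\bk\|^2\bigr).
\]
The factor $\exp(\|\bq\|^2/(2\sqrt{d_k}))$ cancels between numerator and denominator. What remains is a Gaussian kernel with bandwidth $d_k^{1/4}$ multiplied by the key-dependent factor $\exp(\|\bk_i\|^2/(2\sqrt{d_k}))$.

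This shows two things. When keys are normalized ($\|\bk_i\|$ constant), attention is exactly Gaussian-kernel NW. In general it is an NW estimator with respect to the stated (non-stationary) exponential kernel, which is all the proposition asserts. I will therefore state the identity for the exponential kernel as given, and add the Gaussian rewriting only as a remark.
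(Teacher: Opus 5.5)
Your proposal is correct, and it is exactly the argument the paper leaves implicit: the paper states the identity with citations only and gives no proof, and writing the NW smoother with kernel $K^{(d_k)}$ as $\sum_i \widetilde w_i(\mathbf{q})\,v_i$ reproduces the softmax weights term by term. Your Gaussian rewriting is also right; note that it gives bandwidth $d_k^{1/4}$ in raw key coordinates (the factor $1/\sqrt{d_k}$ plays the role of $1/h^2$). That bandwidth increases with $d_k$, so it does not support the paper's gloss right after the proposition that $h=1/\sqrt{d_k}$ and that larger $d_k$ sharpens the kernel.
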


The bandwidth of this kernel is $h = 1/\sqrt{d_k}$:
larger $d_k$ implies sharper kernel concentration
(smaller bandwidth, harder selection).

\subsection{Statistical Properties of a Single Head}

Under the following regularity conditions, a single head is
a consistent estimator of the conditional mean.

\begin{assumption}[Regularity for a single head]
\label{ass:regularity}
\begin{enumerate}
  \item[(a)] The regression function $m(\mathbf{x}) =
    \EE[Y\mid X=\mathbf{x}]$ satisfies
    $m\in\mathcal{C}^2(\RR^p)$.
  \item[(b)] The marginal density $p_X$ of $\bk=\WK{}\mathbf{x}$
    is bounded away from zero and continuously differentiable.
  \item[(c)] $\EE[Y^2]<\infty$.
  \item[(d)] $(\mathbf{x}_i, y_i)$ are i.i.d.\ from
    distribution $\cP$.
\end{enumerate}
\end{assumption}

Under Assumption~\ref{ass:regularity}, the single-head
MSE satisfies (see \citealt{wand1994}):
\begin{equation}
\label{eq:single_mse}
  \MSE_1(\mathbf{x}) \coloneqq
  \EE\!\left[\left(\Attn(\bq,\bK,\mathbf{v}) -
  m(\mathbf{x})\right)^2\right]
  = B_1(\mathbf{x})^2 + V_1(\mathbf{x})
\end{equation}
where:
\begin{align}
  B_1(\mathbf{x}) &= \frac{h^2}{2}
  \left[\tr\!\left(\nabla^2 m(\mathbf{x})\right)
  + 2\frac{\nabla m(\mathbf{x})^\top
  \nabla p_K(\WK{}\mathbf{x})}{p_K(\WK{}\mathbf{x})}\right]
  + O(h^4),
  \label{eq:bias_single}\\
  V_1(\mathbf{x}) &= \frac{\sigma^2(\mathbf{x})}{n\,h^{d_k}
  \,p_K(\WK{}\mathbf{x})} + O\!\left(\frac{1}{n^2 h^{2d_k}}\right),
  \label{eq:var_single}
\end{align}
with $\sigma^2(\mathbf{x}) = \Var[Y\mid X=\mathbf{x}]$ and
$h = 1/\sqrt{d_k}$.

%% ==========================================================
\section{Multi-Head Attention as an Ensemble of NW Estimators}
\label{sec:ensemble}
%% ==========================================================

\subsection{The Ensemble Structure}

Multi-head attention with $H$ heads, projection dimension
$d_k$, and output projection $\WV{O}\in\RR^{Hd_v\times d_{\text{model}}}$
is defined as:
\begin{equation}
\label{eq:mha}
  \MHA(\mathbf{x}) = \sum_{h=1}^H \alpha_h \cdot
  \Attn_h(\WQ{h}\mathbf{x},\,\WK{h}\mathbf{X},\,
  \WV{h}\mathbf{X}),
\end{equation}
where $\alpha_h$ are aggregation weights with
$\sum_h\alpha_h=1$, $\alpha_h>0$ (the uniform case
$\alpha_h=1/H$ for all $h$ corresponds to standard MHA
with uniform averaging).

By Proposition~\ref{prop:nw_identity}, each head $h$ is a
NW estimator $\hat{m}_h(\mathbf{x})$ of the conditional mean
$m(\mathbf{x})$ using the projected kernel
$K^{(h)}(\bq,\bk) = \exp((\WQ{h}\mathbf{x})^\top
(\WK{h}\mathbf{x}')/\sqrt{d_k})$. Therefore:

\begin{definition}[MHA as Weighted Ensemble of NW Estimators]
\label{def:mha_ensemble}
Multi-head attention is a weighted ensemble:
\begin{equation}
\label{eq:mha_ensemble}
  \MHA(\mathbf{x}) = \sum_{h=1}^H \alpha_h\,
  \hat{m}_h(\mathbf{x}),
\end{equation}
where each $\hat{m}_h$ is a NW estimator operating in the
projected key space $\Range(\WK{h})\subseteq\RR^{d_k}$.
\end{definition}

This definition places MHA squarely within the general
weighted ensemble framework of \citet{fokoue2025weighting},
in which ensembles are formalized as linear operators on
hypothesis spaces with structured weights. The NW estimators
here are the ``low-variance base learners'' of
\citeauthor{fokoue2025weighting}'s framework --- the
setting where the classical variance-reduction justification
needs to be supplemented by the richer geometric and
spectral analysis.

\subsection{The Bias-Variance-Covariance Decomposition}

The central mathematical result of this paper is an exact
decomposition of the MHA MSE that reveals the roles of
individual head quality, head diversity, and inter-head
correlation.

\begin{theorem}[Bias-Variance-Covariance Decomposition of MHA]
\label{thm:ensemble_decomp}
Under Assumption~\ref{ass:regularity} applied to each head
$h=1,\ldots,H$, the MSE of MHA satisfies:
\begin{equation}
\label{eq:bvc_decomp}
  \MSE_H(\mathbf{x}) \coloneqq
  \EE\!\left[\left(\MHA(\mathbf{x}) -
  m(\mathbf{x})\right)^2\right]
  = \underbrace{\left(\sum_{h=1}^H \alpha_h\,
  B_h(\mathbf{x})\right)^2}_{\text{Ensemble Bias}^2}
  + \underbrace{\sum_{h=1}^H \alpha_h^2\,
  V_h(\mathbf{x})}_{\text{Variance}}
  + \underbrace{\sum_{h\neq h'} \alpha_h\alpha_{h'}
  C_{hh'}(\mathbf{x})}_{\text{Cross-head Covariance}},
\end{equation}
where $B_h(\mathbf{x})$ and $V_h(\mathbf{x})$ are the bias
and variance of head $h$ given in~\eqref{eq:bias_single}
and~\eqref{eq:var_single}, and:
\begin{equation}
\label{eq:cross_cov}
  C_{hh'}(\mathbf{x}) \coloneqq \Cov\!\left[
  \hat{m}_h(\mathbf{x}),\,\hat{m}_{h'}(\mathbf{x})\right].
\end{equation}
\end{theorem}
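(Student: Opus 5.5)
The decomposition is an algebraic identity: expand the square of a weighted sum of random variables around its mean. Write each head's output as $\hat m_h(\mathbf{x}) = m(\mathbf{x}) + B_h(\mathbf{x}) + \varepsilon_h(\mathbf{x})$. Here $B_h(\mathbf{x}) \coloneqq \EE[\hat m_h(\mathbf{x})] - m(\mathbf{x})$ is the exact bias of head $h$, and $\varepsilon_h(\mathbf{x}) \coloneqq \hat m_h(\mathbf{x}) - \EE[\hat m_h(\mathbf{x})]$ is a mean-zero fluctuation. Assumption~\ref{ass:regularity}(c) applies to each head. Together with the fact that NW weights are convex combinations of the $y_i$, it gives $\EE[\hat m_h(\mathbf{x})^2]\le \EE[\max_i y_i^2]<\infty$ for fixed $n$. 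So all first and second moments exist, and every expectation below is finite.

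First, use $\sum_h\alpha_h=1$ to write $\MHA(\mathbf{x}) - m(\mathbf{x}) = \sum_h \alpha_h(\hat m_h(\mathbf{x}) - m(\mathbf{x})) = \sum_h\alpha_h B_h(\mathbf{x}) + \sum_h \alpha_h\varepsilon_h(\mathbf{x})$. This is the only place the normalization of the aggregation weights in~\eqref{eq:mha} enters; without it an extra term $(1-\sum_h\alpha_h)m(\mathbf{x})$ would appear in the bias. Squaring and taking expectations, the cross term $2(\sum_h\alpha_hB_h)\,\EE[\sum_h\alpha_h\varepsilon_h]$ vanishes because the $B_h$ are deterministic and the $\varepsilon_h$ are centered. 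This leaves the ensemble bias squared plus $\Var[\sum_h\alpha_h\hat m_h(\mathbf{x})]$.

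Next, expand the variance of the weighted sum by bilinearity of covariance:
\[
\Var\Bigl[\sum_h\alpha_h\hat m_h\Bigr]=\sum_{h,h'}\alpha_h\alpha_{h'}\Cov[\hat m_h,\hat m_{h'}].
\]
Split this into the diagonal terms $h=h'$, which give $\sum_h\alpha_h^2 V_h(\mathbf{x})$ with $V_h\coloneqq\Var[\hat m_h(\mathbf{x})]$, and the off-diagonal terms, which give $\sum_{h\neq h'}\alpha_h\alpha_{h'}C_{hh'}(\mathbf{x})$ with $C_{hh'}$ as in~\eqref{eq:cross_cov}. This yields~\eqref{eq:bvc_decomp} exactly. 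No independence between heads is assumed: the heads share the same sample $(\mathbf{x}_i,y_i)$, which is exactly why the $C_{hh'}$ generally do not vanish.

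The main point needing care is the identification of these exact $B_h,V_h$ with the asymptotic expressions~\eqref{eq:bias_single}--\eqref{eq:var_single}. The identity itself holds with the exact bias and variance of each head. Substituting~\eqref{eq:bias_single} and~\eqref{eq:var_single} is legitimate only up to their stated remainders. I would note this explicitly: the decomposition is exact, and the asymptotic forms hold for each head under Assumption~\ref{ass:regularity} applied to $\WK{h}$. One technical wrinkle is the random denominator of NW. If one prefers to avoid it, one can either condition on the design $\{\mathbf{x}_i\}$, where each $\hat m_h$ is linear in $\mathbf{y}$ and the calculation is immediate, or invoke the standard unconditional results of \citet{wand1994} as the paper does.
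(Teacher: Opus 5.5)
Your proof is correct and follows the paper's argument: split the MSE into squared mean error plus variance, use $\sum_h\alpha_h=1$ to identify the mean error with $\sum_h\alpha_h B_h(\mathbf{x})$, and expand $\Var[\sum_h\alpha_h\hat m_h(\mathbf{x})]$ by bilinearity into diagonal and off-diagonal terms. Your extra remarks are sound refinements the paper leaves implicit: finiteness of second moments, the explicit role of the weight normalization, and the point that the identity is exact only for the exact $B_h,V_h$ while \eqref{eq:bias_single}--\eqref{eq:var_single} hold up to their remainders. One small caveat is that your moment bound takes the values to be the $y_i$, per the NW reading of Proposition~\ref{prop:nw_identity}, rather than $\WV{}\mathbf{x}_i$ as in \eqref{eq:single_head}.
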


\begin{proof}
By linearity of MHA and the definition of MSE:
\begin{align*}
  \MSE_H(\mathbf{x}) &=
  \EE\!\left[\left(\sum_h\alpha_h\hat{m}_h(\mathbf{x})
  - m(\mathbf{x})\right)^2\right]\\
  &= \left(\sum_h\alpha_h\,\EE[\hat{m}_h(\mathbf{x})]
  - m(\mathbf{x})\right)^2
  + \Var\!\left[\sum_h\alpha_h\hat{m}_h(\mathbf{x})\right].
\end{align*}
The first term is the squared ensemble bias
$(\sum_h\alpha_h B_h)^2$ since the bias of each head is
$B_h(\mathbf{x}) = \EE[\hat{m}_h(\mathbf{x})]-m(\mathbf{x})$.
The second term expands as:
\[
  \Var\!\left[\sum_h\alpha_h\hat{m}_h\right]
  = \sum_h\alpha_h^2 V_h
  + \sum_{h\neq h'}\alpha_h\alpha_{h'} C_{hh'},
\]
which yields~\eqref{eq:bvc_decomp}.
\end{proof}

\begin{remark}[The critical role of covariance]
\label{rem:covariance_role}
Theorem~\ref{thm:ensemble_decomp} reveals that the variance
term $\sum_h\alpha_h^2 V_h$ is always smaller than the
single-head variance $V_1$ (by at least a factor of
$\min_h\alpha_h$), regardless of head diversity. But the
cross-head covariance $\sum_{h\neq h'}C_{hh'}$ can partially
cancel this gain: if all heads produce nearly identical
outputs, the covariance terms inflate the total MSE back
toward the single-head level. The \emph{net} variance
reduction is:
\[
  \Delta\MSE_H = V_1 - \MSE_H^{\text{var+cov}}
  = V_1 - \sum_h\alpha_h^2 V_h
  - \sum_{h\neq h'}C_{hh'},
\]
which is positive if and only if the heads are sufficiently
decorrelated. This is precisely the setting of
\citet{fokoue2026ants}: variance reduction from averaging
requires diversity, not merely multiplicity.
\end{remark}

%% ==========================================================
\section{The Head Diversity Index and the Diversity Theorem}
\label{sec:diversity}
%% ==========================================================

\subsection{The Cross-Gram Matrix and Principal Angles}

The inter-head covariance $C_{hh'}$ depends on the
relationship between the key projection subspaces
$\Range(\WK{h})$ and $\Range(\WK{h'})$. We quantify this
relationship via the \emph{principal angles} between
subspaces \citep{bjorck1973numerical}.

\begin{definition}[Cross-Gram Matrix and Principal Angles]
\label{def:cross_gram}
For heads $h\neq h'$, define the \emph{cross-Gram matrix}:
\begin{equation}
\label{eq:cross_gram}
  \bG_{hh'} \coloneqq \frac{(\WK{h})^\top\WK{h'}}{d_k}
  \in\RR^{d_k\times d_k}.
\end{equation}
The \emph{principal angles}
$0\leq\theta_1^{(hh')}\leq\cdots\leq\theta_{d_k}^{(hh')}
\leq\pi/2$ between $\Range(\WK{h})$ and $\Range(\WK{h'})$
satisfy $\cos\theta_j^{(hh')} = \sigma_j(\bG_{hh'})$,
where $\sigma_j$ denotes the $j$-th singular value.
\end{definition}

\begin{lemma}[Covariance Bound via Principal Angles]
\label{lem:cov_bound}
Under Assumption~\ref{ass:regularity} and the additional
condition that the regression function $m$ is $L$-Lipschitz
in the key projection directions:
\begin{equation}
\label{eq:cov_bound}
  \left|C_{hh'}(\mathbf{x})\right| \leq
  \frac{L^2\,\|\bG_{hh'}\|_F^2}{n\,h^{d_k}\,
  p_{K}(\WK{h}\mathbf{x})}
  = \frac{L^2\,\sum_j\cos^2\theta_j^{(hh')}}
  {n\,h^{d_k}\,p_K(\WK{h}\mathbf{x})},
\end{equation}
where $\|\cdot\|_F$ denotes the Frobenius norm. In
particular: if $\WK{h}$ and $\WK{h'}$ are orthogonal
(all principal angles $= \pi/2$), then $\bG_{hh'}=\bO$
and $C_{hh'}(\mathbf{x}) = 0$.
\end{lemma}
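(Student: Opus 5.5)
The plan is to linearize each head around its deterministic centering and compute the covariance of the two leading stochastic terms directly. For head $h$, write the NW estimator as $\hat m_h(\mathbf{x}) = \hat r_h(\mathbf{x})/\hat f_h(\mathbf{x})$, where
\[
\hat r_h=\tfrac1n\sum_i K^{(h)}(\bq_h,\bk_i^{(h)})\,y_i,\qquad \hat f_h=\tfrac1n\sum_i K^{(h)}(\bq_h,\bk_i^{(h)}).
\]
A first-order delta-method expansion around the means of $\hat r_h$ and $\hat f_h$ gives
\[
\hat m_h - \EE\hat m_h \approx \frac{1}{n\,p_K(\WK{h}\mathbf{x})\,h^{d_k}}\sum_i \tilde K^{(h)}_i\,\bigl(y_i - m_h^\ast\bigr),
\]
where $\tilde K^{(h)}_i$ is the normalized kernel weight and $m_h^\ast$ is the local centering value. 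The remainder is of the order already appearing in~\eqref{eq:var_single}.

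Because the samples are i.i.d.\ (Assumption~\ref{ass:regularity}(d)), the cross terms with $i\neq j$ vanish. This leaves
\[
C_{hh'}(\mathbf{x}) \approx \frac{1}{n\,h^{2d_k}\,p_K^{(h)}p_K^{(h')}}\,\EE\!\left[\tilde K^{(h)}\tilde K^{(h')}\,(Y-m_h^\ast)(Y-m_{h'}^\ast)\right].
\]
I would then split $Y-m^\ast = (Y-m(X)) + (m(X)-m^\ast)$. The pure-noise part cannot be controlled by $L$, so it must be handled by the geometry of the projections, and the $(m(X)-m^\ast)$ parts are where the Lipschitz condition enters.

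The key step is to bound the joint kernel moment $\EE[\tilde K^{(h)}\tilde K^{(h')}\,\cdot\,]$ through the cross-Gram matrix. I would decompose $X$ into its components along $\Range(\WK{h})$ and $\Range(\WK{h'})$. The product of the two localizing kernels concentrates only on directions shared by the two ranges, and the overlap of these directions is measured by the singular values of $\bG_{hh'}$, i.e.\ $\cos\theta_j^{(hh')}$ (Definition~\ref{def:cross_gram}). Along each shared direction, $L$-Lipschitzness gives $|m(X)-m^\ast|\lesssim L\,\|\text{projected displacement}\|$, and the projected displacement of head $h$'s localization onto head $h'$'s coordinates has squared size $\|\bG_{hh'}\|_F^2$ (times $h^2$, which is absorbed). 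Multiplying the two Lipschitz factors produces $L^2\|\bG_{hh'}\|_F^2$. One factor of $h^{d_k}p_K$ is cancelled by the kernel mass, which leaves the denominator $n\,h^{d_k}\,p_K(\WK{h}\mathbf{x})$. The identity $\|\bG_{hh'}\|_F^2=\sum_j\sigma_j^2=\sum_j\cos^2\theta_j^{(hh')}$ then yields the second form of~\eqref{eq:cov_bound}.

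For the orthogonal case, $\bG_{hh'}=\bO$ means $\WK{h}X$ and $\WK{h'}X$ are uncorrelated linear functionals. Under the decoupling implicit in the lemma's hypothesis, the two localizations act on independent coordinates, so the shared-direction contribution vanishes and $C_{hh'}=0$.

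The main obstacle is the noise component $\EE[\tilde K^{(h)}\tilde K^{(h')}\sigma^2(X)]$, which does not involve $L$ at all and is generically nonzero even for orthogonal projections, since both heads use the same $y_i$. To make the stated bound go through, I would first try to interpret $C_{hh'}$ as the covariance contribution attributable to the signal $m$ along key directions, with noise decoupling assumed across distinct subspaces. Failing that, I would add an explicit assumption that the residual $Y-m(X)$ is independent of the shared projected coordinates, and state the bound up to that term. I expect that the bound as stated requires such an extra hypothesis rather than following from Assumption~\ref{ass:regularity} alone.
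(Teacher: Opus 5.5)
\textbf{The proposal does not prove the lemma, and it cannot: the statement is false as written, and neither of your repairs rescues it.} You have identified the right obstruction. Take $Y=c+\varepsilon$ with $\varepsilon$ independent of $X$ and $\Var(\varepsilon)=\sigma^2>0$. Then $m\equiv c$ is $L$-Lipschitz with $L=0$, so the right-hand side of \eqref{eq:cov_bound} vanishes for every pair of projections. But the softmax weights sum to one, so $\hat m_h(\mathbf{x})=c+\sum_i w_i^{(h)}\varepsilon_i$ and
\[
C_{hh'}(\mathbf{x})=\sigma^2\,\EE\Bigl[\sum_{i=1}^n w_i^{(h)}w_i^{(h')}\Bigr]>0,
\]
since exponential-kernel weights are strictly positive. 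This holds for any $\bG_{hh'}$, including $\bG_{hh'}=\bO$, so the ``in particular'' clause fails as well.

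Your proposed extra hypothesis (independence of $Y-m(X)$ from the shared coordinates) is already satisfied in this example and changes nothing. The term is present because both heads average the \emph{same} responses $y_i$, not because the noise is correlated with any direction. Only independent responses for different heads (e.g.\ disjoint subsamples) would remove it, and that is no longer multi-head attention. The paper's own argument has exactly this hole. It asserts that kernels on orthogonal subspaces are ``asymptotically independent'' and that their overlap is bounded by $L^2\|\bG_{hh'}\|_F^2/(nh^{d_k}p_K)$, without ever accounting for the shared $y_i$. So there is no step there you could borrow.

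\textbf{Your treatment of the signal part is likewise not a derivation, and its conclusion is wrong.} The claim that the ``projected displacement'' has squared size $\|\bG_{hh'}\|_F^2$ is asserted rather than computed, and your own linearization refutes the orthogonal case. Suppose $\bG_{hh'}=\bO$ and the localizing coordinates $U=\WK{h}X$, $V=\WK{h'}X$ are independent. This already needs more than orthogonal ranges: orthogonality makes them uncorrelated only when $\Cov(X)$ is isotropic, and uncorrelated is not independent. Under independence the joint kernel mass factors, and your formula gives
\[
C_{hh'}(\mathbf{x})\approx\frac1n\,\EE\bigl[(Y-m_h^\ast)(Y-m_{h'}^\ast)\,\big|\,U=u_0,\ V=v_0\bigr].
\]
For additive $m=f(u)+g(v)$ this equals $n^{-1}\{\sigma^2+(f(u_0)-\EE f(U))(g(v_0)-\EE g(V))\}$, which is generically nonzero.

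What your delta-method approach \emph{can} deliver is an order-of-magnitude separation. For orthogonal heads with independent localizations, $C_{hh'}=O(1/n)$; for aligned heads, $C_{hh'}\asymp 1/(nh^{d_k})$. Intermediate cases are governed by the volume of the joint localization region, i.e.\ by the principal angles. The constants are driven by $\sigma^2$ and the local behavior of $m$, not by $L^2\|\bG_{hh'}\|_F^2$. That is the defensible replacement for the lemma; an exact zero, or a bound that vanishes with $L$, is not available.
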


\begin{proof}
The covariance between two NW estimators in different
projected spaces can be bounded via the covariance of the
kernel weight functions. Specifically:
\[
  C_{hh'}(\mathbf{x}) = \EE\!\left[
  \left(\hat{m}_h(\mathbf{x}) - m(\mathbf{x})\right)
  \left(\hat{m}_{h'}(\mathbf{x}) - m(\mathbf{x})\right)\right]
  - B_h(\mathbf{x})\cdot B_{h'}(\mathbf{x}).
\]
The kernel cross-product $\EE[w_i^{(h)}(\bq_h)
w_i^{(h')}(\bq_{h'})]$ factors as the product
$\EE[w_i^{(h)}]\EE[w_i^{(h')}]$ plus a covariance term
proportional to the overlap of the two kernels. This overlap
is governed by the inner product of the projected keys:
$\bk_i^{(h)\top}\bk_i^{(h')} =
\mathbf{x}_i^\top(\WK{h})^\top\WK{h'}\mathbf{x}_i$, whose
expected magnitude is bounded by
$L^2\|\bG_{hh'}\|_F^2/(nh^{d_k}p_K)$. When the projections
are orthogonal, $(\WK{h})^\top\WK{h'}=\bO$, the kernels
are supported on orthogonal subspaces and are asymptotically
independent, giving $C_{hh'}=0$.
\end{proof}

\subsection{The Head Diversity Index}

\begin{definition}[Head Diversity Index]
\label{def:hdi}
For an MHA with $H$ heads and projection matrices
$\{\WK{h}\}_{h=1}^H$, the \emph{Head Diversity Index} is:
\begin{equation}
\label{eq:hdi}
  \HDI(\{\WK{h}\}) \coloneqq 1 -
  \frac{2}{H(H-1)}\sum_{h<h'}
  \left\|\bG_{hh'}\right\|_F^2
  \in [0,\,1].
\end{equation}
$\HDI=1$ if and only if all pairs of projection subspaces
are orthogonal (maximally diverse). $\HDI=0$ if and only if
all heads have identical projection matrices (zero diversity).
\end{definition}

The HDI is directly computable from the trained model weights
and provides a scalar summary of the decorrelation achieved
by the learned projections. It generalizes the
$1 - \bar{\rho}$ decorrelation measure of Random Forests
\citep{breiman2001randomforest} to the continuous (NW kernel)
setting, connecting to the explicit decorrelation mappings
proved in \citet{fokoue2026ants}.

\subsection{The Head Diversity Theorem}

\begin{theorem}[MSE is Monotone in Head Diversity]
\label{thm:diversity}
Under Assumption~\ref{ass:regularity} and the conditions of
Lemma~\ref{lem:cov_bound}, for uniform weights
$\alpha_h = 1/H$, the integrated MSE:
\[
  \overline{\MSE}_H \coloneqq
  \int \MSE_H(\mathbf{x})\,p_X(\mathbf{x})\,d\mathbf{x}
\]
satisfies:
\begin{equation}
\label{eq:mse_hdi_bound}
  \overline{\MSE}_H \leq
  \frac{\bar{B}^2}{1} + \frac{\bar{V}}{H}
  + \frac{L^2\,\bar{C}}{H^2 n h^{d_k}}
  \cdot H(H-1) \cdot (1 - \HDI),
\end{equation}
where $\bar{B}^2$, $\bar{V}$, and $\bar{C}$ are integrated
squared bias, variance, and a bounded constant respectively.
In particular, $\overline{\MSE}_H$ is \emph{monotonically
non-increasing} in $\HDI$: higher head diversity strictly
reduces the MSE whenever the individual heads are consistent
($\bar{V}>0$) and the heads are not already independent.
\end{theorem}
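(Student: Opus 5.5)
The plan is to start from the exact decomposition of Theorem~\ref{thm:ensemble_decomp} with $\alpha_h = 1/H$, bound each of its three terms pointwise in $\mathbf{x}$, and then integrate against $p_X$. The inter-head covariance is controlled by Lemma~\ref{lem:cov_bound}. Once that bound is in hand, the identity in Definition~\ref{def:hdi} converts the sum of squared Frobenius norms of the cross-Gram matrices into $1-\HDI$.

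\textbf{Bias and variance terms.} Jensen's inequality (convexity of $t\mapsto t^2$) gives
\[
\Bigl(\tfrac1H\textstyle\sum_h B_h(\mathbf{x})\Bigr)^2 \le \tfrac1H\sum_h B_h(\mathbf{x})^2 .
\]
Integrating this bound gives an average integrated squared bias, which I will call $\bar B^2$. For the variance, $\sum_h H^{-2}V_h = H^{-1}\cdot\bigl(H^{-1}\sum_h V_h\bigr)$, and after integration this is $\bar V/H$ with $\bar V$ the average integrated head variance. Both steps are routine, provided the integrals are finite. Finiteness follows from~\eqref{eq:bias_single}--\eqref{eq:var_single} together with Assumption~\ref{ass:regularity}(b), which keeps $p_K$ bounded away from zero.

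\textbf{Covariance term.} First bound $\sum_{h\ne h'}H^{-2}C_{hh'}$ above by $H^{-2}\sum_{h\ne h'}|C_{hh'}|$. Then apply~\eqref{eq:cov_bound} to each pair. After integration against $p_X$, the factor $1/p_K(\WK{h}\mathbf{x})$ is absorbed into a constant $\bar C \ge \sup_h\int p_X/p_K(\WK{h}\,\cdot)$, which is finite by the density lower bound. This leaves
\[
\frac{L^2\bar C}{H^2 n h^{d_k}}\sum_{h\ne h'}\|\bG_{hh'}\|_F^2 .
\]
Since $\|\bG_{hh'}\|_F=\|\bG_{h'h}\|_F$, the sum over ordered pairs equals $2\sum_{h<h'}\|\bG_{hh'}\|_F^2$. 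Definition~\ref{def:hdi} rearranges to $2\sum_{h<h'}\|\bG_{hh'}\|_F^2 = H(H-1)(1-\HDI)$, and substituting yields exactly the third term of~\eqref{eq:mse_hdi_bound}.

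\textbf{Monotonicity.} The right-hand side is affine in $\HDI$ with coefficient $-L^2\bar C(H-1)/(Hnh^{d_k})\le 0$, so the bound is non-increasing in $\HDI$. The slope is strictly negative when $H\ge2$, $L>0$, and $\bar C>0$. This strictness is what the phrase ``heads not already independent'' should be read as: it is the regime where the covariance contribution to the bound is nonzero.

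\textbf{Main obstacle.} The hard step is making the monotonicity claim honest. Literally, only the \emph{upper bound} is monotone in $\HDI$; the MSE itself is not shown to be. To upgrade the claim, I would first try to get a matching two-sided expansion, $C_{hh'} = \kappa\,\|\bG_{hh'}\|_F^2/(nh^{d_k}p_K) + o(\cdot)$ with $\kappa \ge 0$, by revisiting the kernel-overlap argument in the proof of Lemma~\ref{lem:cov_bound}. Failing that, I would state the theorem as monotonicity of the bound, and note that the bias and variance terms do not depend on the cross-Gram structure when the individual head quality $(B_h,V_h)$ is held fixed. A secondary subtlety is that integrability of $1/p_K$ over the support of $p_X$ must be assumed explicitly for $\bar C$ to be finite.
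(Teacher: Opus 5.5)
Your proposal is correct and matches the paper's proof step for step: substitute $\alpha_h=1/H$ into Theorem~\ref{thm:ensemble_decomp}, integrate, bound the cross terms with Lemma~\ref{lem:cov_bound}, use $\sum_{h\neq h'}\|\bG_{hh'}\|_F^2 = 2\sum_{h<h'}\|\bG_{hh'}\|_F^2 = H(H-1)(1-\HDI)$, and note that the bound is affine and non-increasing in $\HDI$. Your additions (Jensen to define $\bar B^2$, absorbing $1/p_K$ into $\bar C$, and observing that only the upper bound, not the MSE itself, is shown to be monotone) refine the same route, and the paper's proof likewise establishes monotonicity only for the bound.
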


\begin{proof}
Substituting $\alpha_h=1/H$ into the decomposition
\eqref{eq:bvc_decomp} and integrating over $p_X$:
\[
  \overline{\MSE}_H = \bar{B}_{\mathrm{ens}}^2
  + \frac{1}{H^2}\sum_h\bar{V}_h
  + \frac{1}{H^2}\sum_{h\neq h'}\bar{C}_{hh'}.
\]
By Lemma~\ref{lem:cov_bound},
$\sum_{h<h'}|\bar{C}_{hh'}|\leq
\frac{L^2\bar{C}}{nh^{d_k}}\cdot\sum_{h<h'}
\|\bG_{hh'}\|_F^2$.
Using the definition of $\HDI$:
$\sum_{h<h'}\|\bG_{hh'}\|_F^2
= \frac{H(H-1)}{2}(1-\HDI)$.
Substituting gives~\eqref{eq:mse_hdi_bound}. Since the
covariance term is the only term depending on HDI, and it
appears with a positive coefficient multiplying $(1-\HDI)$,
the bound is decreasing in HDI.
\end{proof}

\begin{remark}[Connection to the Decorrelation Principle]
Theorem~\ref{thm:diversity} is the NW-ensemble counterpart
of the fundamental decorrelation theorem in
\citet{fokoue2026ants}. That paper proved:
\emph{the MSE of a Random Forest is bounded by
$\rho\sigma^2 + (1-\rho)\sigma^2/B$, where $\rho$ is the
inter-tree correlation}. Theorem~\ref{thm:diversity} is its
exact analogue for NW estimators: the MSE bound decreases
as $(1-\HDI)$ decreases, i.e., as heads become more diverse.
The HDI plays the role of $\rho$ in that formula.
\end{remark}

%% ==========================================================
\section{The Decorrelation-Optimality Theorem}
\label{sec:decorrelation}
%% ==========================================================

We now characterize the projection matrices that minimize
the MHA MSE --- proving that orthogonality is the statistical
optimum.

\begin{theorem}[Optimal Projections are Approximately Orthogonal]
\label{thm:decorrelation}
Under Assumption~\ref{ass:regularity}, fix $H$ and $d_k$.
The projection matrices $\{\WK{h}\}_{h=1}^H$ that minimize
$\overline{\MSE}_H$ subject to
$\|\WK{h}\|_F = 1$ for all $h$ satisfy:
\begin{equation}
\label{eq:orthogonality_condition}
  (\WK{h})^\top\WK{h'} = \bO_{d_k\times d_k}
  \quad\text{for all }h\neq h',
\end{equation}
whenever such a solution is feasible (i.e., $H\cdot d_k
\leq p$, the input dimension).
\end{theorem}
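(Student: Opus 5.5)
The plan is to work with the integrated Bias--Variance--Covariance decomposition from Theorem~\ref{thm:ensemble_decomp}, specialised to uniform weights as in the proof of Theorem~\ref{thm:diversity}:
\[
\overline{\MSE}_H = \bar{B}_{\mathrm{ens}}^2 + \frac{1}{H^2}\sum_h \bar V_h + \frac{1}{H^2}\sum_{h\neq h'}\bar C_{hh'} .
\]
Under the normalisation $\|\WK{h}\|_F=1$, the leading-order bias \eqref{eq:bias_single} and variance \eqref{eq:var_single} of each head depend on $\WK{h}$ only through the bandwidth $h=1/\sqrt{d_k}$, which is fixed, and through the projected density $p_K$. I will argue that both are invariant under orthogonal re-embeddings $\WK{h}\mapsto U_h\WK{h}$ with $U_h\in O(p)$, at least under a rotation-invariance or isotropy assumption on $p_X$. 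This makes the first two terms constant on each orbit. The minimisation therefore reduces to the cross-head covariance term, and the task becomes showing that this term is minimised exactly when $(\WK{h})^\top\WK{h'}=\bO$ for all $h\neq h'$.

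The key steps are as follows.

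\textbf{Step 1: lower bound on the covariance term.} Using Lemma~\ref{lem:cov_bound}, I will show $\sum_{h\neq h'}\bar C_{hh'}\ge 0$. Heads built from the same sample have nonnegative covariance to leading order, because they share the noise $y_i-m(\mathbf{x}_i)$ weighted by nonnegative kernel weights. Write $\varepsilon_i = y_i - m(\mathbf{x}_i)$. Then
\[
C_{hh'} \approx \sum_i \EE\bigl[w_i^{(h)} w_i^{(h')}\bigr]\,\sigma^2(\mathbf{x}_i) \;\ge\; 0 ,
\]
so the covariance term is bounded below by $0$.

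\textbf{Step 2: attainment at orthogonality.} When \eqref{eq:orthogonality_condition} holds, the same lemma gives $C_{hh'}=0$, so the lower bound is attained.

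\textbf{Step 3: feasibility.} I will exhibit a feasible point. Choose $H$ mutually orthogonal $d_k$-dimensional column blocks of an orthonormal basis of $\RR^p$, which is possible precisely when $Hd_k\le p$, and rescale each block to unit Frobenius norm. Combined with the orbit invariance above, this means every minimiser can be rotated within its orbit to satisfy the orthogonality condition.

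\textbf{Step 4: necessity.} I will show that any minimiser must itself satisfy the orthogonality condition, not merely that some minimiser does. The route is a strict-positivity statement: if $\bG_{hh'}\neq\bO$, then $\bar C_{hh'}>0$. The lemma only gives an upper bound on $|C_{hh'}|$, so I need a matching lower bound of the form
\[
C_{hh'} \ge c\,\|\bG_{hh'}\|_F^2/(nh^{d_k}p_K) .
\]
I would attempt this by a second-order expansion of the exponential kernels in $\bq^\top\bk/\sqrt{d_k}$. The cross moment $\EE[w_i^{(h)}w_i^{(h')}]$ minus its product form picks up the term $\EE[(\mathbf{x}_i^\top\WK{h}\mathbf{u})(\mathbf{x}_i^\top\WK{h'}\mathbf{v})]$, which is proportional to entries of $\bG_{hh'}$ when $\Cov(X)$ is isotropic. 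Its square then contributes positively.

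Step~4 is the main obstacle. Lemma~\ref{lem:cov_bound} is one-sided, and positivity of the covariance at a fixed $\mathbf{x}$ after integration is not automatic. I expect to need an explicit isotropy assumption ($\Cov(X)\propto I_p$, rotation-invariant $p_X$) to make the expansion clean, and to state the conclusion as holding to leading order in $1/(nh^{d_k})$. This matches the word ``approximately'' in the theorem's title.
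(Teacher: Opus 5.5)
There is a genuine gap in the step you treat as preliminary: reducing the minimisation to the covariance term. Head $h$'s bias does not depend on $\WK{h}$ only through the fixed bandwidth and $p_K$. A smoother on the keys $(\WK{h})^\top\mathbf{x}_i$ targets $\EE[Y\mid(\WK{h})^\top X]$, not $m$. Its bias therefore depends on how $\Range(\WK{h})$ sits relative to the directions in which $m$ varies, and isotropy of $p_X$ does not change that. Even for radial $m$, the integrated ensemble bias $\sum_{h,h'}\alpha_h\alpha_{h'}\int B_hB_{h'}\,p_X$ has cross terms that in general change under independent rotations $U_h\neq U_{h'}$.

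Without this reduction, the literal claim about minimisers of $\overline{\MSE}_H$ fails in general. Take $m(\mathbf{x})=\mathbf{e}_1^\top\mathbf{x}$, $X\sim N(\mathbf{0},I_p)$, $H=2$, $d_k=1$. Every orthogonal pair $\mathbf{u}_1\perp\mathbf{u}_2$ has averaged target $\tfrac12(P\mathbf{e}_1)^\top X$, with $P$ the projection onto $\mathrm{span}(\mathbf{u}_1,\mathbf{u}_2)$. That target lies at integrated squared distance at least $\tfrac14$ from $m$ for every $n$. The aligned pair $\WK{1}=\WK{2}=\mathbf{e}_1$ targets $m$ itself, and a covariance saving of order $1/n$ cannot pay for that difference.

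The paper never attempts the exact statement. Its proof only minimises the HDI term of the upper bound~\eqref{eq:mse_hdi_bound}, holding $\bar B^2$ and $\bar V$ fixed. For that weaker claim your Steps 2--3, applied to the bound, suffice. Necessity is then immediate, because $\sum_{h<h'}\|\bG_{hh'}\|_F^2=0$ if and only if every $\bG_{hh'}=\bO$.

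For the exact covariance, Steps 1, 2 and 4 are also incompatible. Your Step-1 expression $\sum_i\EE[w_i^{(h)}w_i^{(h')}]\sigma^2(\mathbf{x}_i)$ is strictly positive for every configuration when $\sigma^2>0$. Softmax weights are positive, and both heads average the same noisy $y_i$. So orthogonality cannot attain $0$, and Lemma~\ref{lem:cov_bound} cannot supply $C_{hh'}=0$. Its bound involves $L$ but not $\sigma^2$, so for constant $m$ it would force $C_{hh'}=0$ even for two identical heads, where $C_{hh'}=V_h>0$. With $d_k$ fixed, $h^{d_k}$ is a constant. The orthogonal value, about $\sigma^2/n$, is then of the same order in $n$ as $V_h$, so your ``leading order'' hedge does not remove it.

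Step 4 would therefore have to show that the covariance is minimised at $\bG_{hh'}=\bO$, and your expansion points the other way. Its first-order term is linear and signed in $\bG_{hh'}$, not a square. Take keys $(\WK{h})^\top\mathbf{x}_i$, the convention that makes $\bG_{hh'}$ well defined, and write $\mathbf{a}_h=\WK{h}\bq_h/\sqrt{d_k}$. The unnormalised weights are then $\exp(\mathbf{a}_h^\top\mathbf{x}_i)$, and for Gaussian design $\EE[w_i^{(h)}w_i^{(h')}]\approx n^{-2}\exp(\bq_h^\top\bG_{hh'}\bq_{h'})$.

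The feasible choice $\WK{h'}=-\WK{h}$, $\WQ{h'}=\WQ{h}$ makes the exponent $-\|\WK{h}\bq_h\|^2/d_k$. This is negative whenever $\WK{h}\bq_h\neq\mathbf{0}$, so the covariance falls strictly below the orthogonal value. Any $\bG_{hh'}\neq\bO$ with $\bq_h^\top\bG_{hh'}\bq_{h'}\equiv 0$ ties it. So neither the lower bound $C_{hh'}\ge c\|\bG_{hh'}\|_F^2/(nh^{d_k}p_K)$ nor uniqueness of the orthogonal minimiser can hold.
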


\begin{proof}
From the MSE bound~\eqref{eq:mse_hdi_bound}, the
contribution of inter-head covariance to $\overline{\MSE}_H$
is proportional to $\sum_{h<h'}\|\bG_{hh'}\|_F^2$.
Minimizing the MSE with respect to $\{\WK{h}\}$ amounts to
minimizing this sum subject to the norm constraints.
By Lemma~\ref{lem:cov_bound}, $\|\bG_{hh'}\|_F^2=0$
if and only if $(\WK{h})^\top\WK{h'}=\bO$. When
$H\cdot d_k\leq p$, one can choose $H$ column-orthonormal
matrices in $\RR^{p\times d_k}$ spanning orthogonal subspaces,
achieving $\|\bG_{hh'}\|_F=0$ for all $h\neq h'$.
This minimizes the covariance term and thereby minimizes the
MSE upper bound.
\end{proof}

\begin{remark}[Theoretical explanation of head specialization]
\label{rem:specialization}
Theorem~\ref{thm:decorrelation} provides the first rigorous
theoretical explanation for the \emph{head specialization}
phenomenon empirically documented in trained Transformers:
different heads attending to different linguistic phenomena
(syntax, coreference, semantics; see \citealt{voita2019analyzing}).
The optimization pressure during training is not merely
toward task performance, but toward the statistical optimum
identified in Theorem~\ref{thm:decorrelation}: projection
matrices that span orthogonal subspaces of the input space,
maximizing HDI and thereby minimizing the ensemble MSE.
Head specialization is not an accidental byproduct of scale;
it is the empirical manifestation of a statistical optimality
condition.
\end{remark}

%% ==========================================================
\section{Optimal Architecture: The Head-Dimension Trade-off}
\label{sec:optimal_arch}
%% ==========================================================

We now solve the architecture optimization problem: given
a total key-space dimension budget $D$, how should it be
allocated between the number of heads $H$ and the per-head
dimension $d_k$?

\subsection{The Budget-Constrained MSE Problem}

Fix a total budget $D = H\cdot d_k$, and assume the optimal
orthogonality condition of Theorem~\ref{thm:decorrelation}
holds (so the covariance terms vanish). The MSE becomes:
\begin{equation}
\label{eq:mse_budget}
  \overline{\MSE}(H, d_k) =
  \overline{B}(d_k)^2 + \frac{\overline{V}(d_k)}{H},
  \quad H\cdot d_k = D,
\end{equation}
where $\overline{B}(d_k)^2 = O(d_k^{-2})$ (bias decreasing
in $d_k$, since larger $d_k$ means smaller bandwidth and
better local approximation) and $\overline{V}(d_k) =
O(n^{-1} d_k^{d_k/2})$ (variance increasing in $d_k$
due to the curse of dimensionality in NW estimation).

\begin{theorem}[Optimal Architecture Scaling Law]
\label{thm:optimal_arch}
Under the budget constraint $H\cdot d_k = D$, assuming the
orthogonality condition holds, and under the asymptotic
regime where both bias and variance terms follow the
NW-optimal rates of Theorem~\ref{thm:single_consistency}
in the companion paper:
\begin{equation}
  \overline{B}(d_k)^2 \asymp d_k^{-2}, \qquad
  \overline{V}(d_k) \asymp \frac{d_k^{d_k/2}}{n},
\end{equation}
the MSE-minimizing pair $(H^*, d_k^*)$ satisfies:
\begin{align}
\label{eq:optimal_dk}
  d_k^* &\asymp \left(\log n\right)^{2/(4+d)},\\
\label{eq:optimal_H}
  H^* &= \left\lfloor D / d_k^*\right\rfloor
  \asymp D \cdot \left(\log n\right)^{-2/(4+d)}.
\end{align}
In particular, the optimal head dimension $d_k^*$ grows
only \emph{logarithmically} with sample size $n$, while the
optimal number of heads $H^*$ grows nearly linearly with
the total budget $D$.
\end{theorem}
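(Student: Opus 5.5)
The plan is to reduce the problem to a one-dimensional minimization over $d_k$ and then read off $H^*$ from the budget. First, I will use the orthogonality condition of Theorem~\ref{thm:decorrelation}, which is feasible when $D\le p$. With it the covariance term in Theorem~\ref{thm:ensemble_decomp} vanishes, and the integrated MSE reduces to \eqref{eq:mse_budget}. Second, I will eliminate $H$ through $H=D/d_k$, treating $d_k$ as a continuous variable and rounding at the end. This turns the objective into
\[
  F(d_k) \;=\; c_1\, d_k^{-2} \;+\; \frac{c_2\, d_k\, \overline{V}(d_k)}{nD},
\]
where $c_1,c_2$ are the constants hidden in the $\asymp$ relations. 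The bias term is decreasing in $d_k$ and the variance term is increasing, so $F$ has an interior minimizer. Up to constants, that minimizer is located by balancing the two terms, $d_k^{-2}\asymp d_k\overline{V}(d_k)/(nD)$. Third, I will solve this balance equation asymptotically in $n$ and then set $H^*=\lfloor D/d_k^*\rfloor$. Because the rounding perturbs $F$ only by a factor $1+O(d_k^*/D)$, the conclusion \eqref{eq:optimal_H} follows directly from \eqref{eq:optimal_dk}.

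The main obstacle is the third step, because the answer depends entirely on the precise form of $\overline{V}(d_k)$. There are two readings.
\begin{itemize}
  \item Taken literally, $\overline{V}(d_k)\asymp d_k^{d_k/2}/n$. Taking logarithms of the balance equation gives $(d_k/2+3)\log d_k \asymp \log(nD)$. A Lambert-$W$ type inversion then yields $d_k^*\asymp \log n/\log\log n$ for fixed $D$. This is logarithmic growth, which matches the qualitative claim, but it is not the exponent $2/(4+d)$.
  \item Alternatively, the effective kernel dimension is a fixed intrinsic dimension $d$, so that $\overline{V}\asymp h^{-d}/n$ with $h=d_k^{-1/2}$. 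Then the balance becomes $d_k^{-2}\asymp d_k^{1+d/2}/(nD)$, which gives $d_k^*\asymp (nD)^{2/(d+6)}$. This is polynomial growth, not logarithmic.
\end{itemize}

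To recover the stated form $(\log n)^{2/(4+d)}$ literally, I would try a third reading. Here the bandwidth enters through $h^2\asymp 1/\log n$, i.e.\ the effective temperature of the exponential kernel scales with $\log n$. Under that reading the classical NW balance $h^4\asymp 1/(nh^d)$ is applied to an effective sample size that is logarithmic rather than $n$. Solving $h^{4+d}\asymp(\log n)^{-1}$ and converting back via $d_k=h^{-2}$ would give the $(4+d)$ in the exponent.

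The plan is therefore as follows. I will first fix which of these parametrizations Theorem~\ref{thm:single_consistency} of the companion paper actually supplies, stating explicitly that $d$ denotes the fixed intrinsic dimension. I will then carry out the balance computation for that parametrization. Finally, I will verify that the resulting stationary point is a global minimizer, since $F$ is eventually convex in $\log d_k$. If the literal $d_k^{d_k/2}$ rate is the one intended, I expect the argument to yield $\log n/\log\log n$ rather than \eqref{eq:optimal_dk}. In that case I would record the discrepancy and state the theorem with the corrected rate, which still supports the qualitative conclusion that $d_k^*$ grows only logarithmically and $H^*$ nearly linearly in $D$.
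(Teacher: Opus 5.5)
Your reduction is the same as the paper's: substitute $H=D/d_k$ to get $c_1d_k^{-2}+c_2d_k^{d_k/2+1}/(nD)$, then balance the two terms. Your literal computation is correct, and it shows that the displayed rate \eqref{eq:optimal_dk} \emph{cannot} be derived from the displayed hypotheses. The balance $d_k^{d_k/2+3}\asymp nD$ gives $d_k^*\asymp\log(nD)/\log\log(nD)$. No power $(\log n)^{a}$ is $\asymp \log n/\log\log n$, so no value of $d$ rescues the statement.

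The paper's proof does not bridge this either. It stops at ``$d_k^*=O(\log n)$'', using the heuristic that the variance term matters once $d_k\gtrsim\log n$. That heuristic overshoots, since $d_k^{d_k/2}$ already reaches $n$ near $2\log n/\log\log n$. The proof then attributes the exponent $2/(4+d)$ to a ``smoothness parameter $d$'' that appears in none of the hypotheses. Its derivative also drops a term: $\frac{d}{dd_k}d_k^{d_k/2+1}=d_k^{d_k/2}\bigl(\frac{d_k}{2}\log d_k+\frac{d_k}{2}+1\bigr)$. So the failure is in the statement, and your fallback of proving the corrected rate is the right deliverable.

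To make that deliverable sound, three things need fixing.

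\emph{Commit to the literal reading.} There is nothing to look up in the companion paper. The rate $\overline{V}\asymp d_k^{d_k/2}/n$ is displayed as a hypothesis, and it is exactly what \eqref{eq:var_single} gives with kernel dimension $d_k$ and $h=d_k^{-1/2}$. Your second reading changes that hypothesis. Your third is reverse-engineered and internally inconsistent: $h^2\asymp 1/\log n$ would by itself force $d_k\asymp\log n$, and nothing supports a logarithmic effective sample size. Drop both.

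\emph{Add a regime assumption on $D$.} For fixed $D$ and $n\to\infty$, the unconstrained minimizer eventually exceeds $D$. The constraint $H\ge 1$ then binds, and the optimum degenerates to $(H^*,d_k^*)=(1,D)$. Your rounding estimate $1+O(d_k^*/D)$ and the ``nearly linear in $D$'' conclusion both require $D=D_n$ with $d_k^*=o(D_n)$. The corrected statement should then read $d_k^*\asymp\log(nD_n)/\log\log(nD_n)$ and $H^*\asymp D_n\log\log(nD_n)/\log(nD_n)$. This reduces to $\log n/\log\log n$ when $\log D_n=O(\log n)$.

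\emph{Replace convexity with bracketing.} The hypotheses are only two-sided $\asymp$ bounds, so the true objective need not be monotone or convex, and your convexity argument does not apply. Instead, bracket around $d_0$ defined by $d_0^{d_0/2+3}=nD$, where $F(d_0)\lesssim d_0^{-2}$. The lower bias bound forces $d_k^*\ge c\,d_0$ for a constant $c>0$. For $d_k\ge(1+\epsilon)d_0$, the lower variance bound exceeds $F(d_0)$ by a factor of at least $d_0^{\epsilon d_0/2}\to\infty$, which forces $d_k^*\le(1+\epsilon)d_0$.
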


\begin{proof}
Substituting $H = D/d_k$ into~\eqref{eq:mse_budget}:
\[
  \overline{\MSE}(d_k) =
  c_1\,d_k^{-2} + \frac{c_2\,d_k^{d_k/2}}{n\,D/d_k}
  = c_1\,d_k^{-2} + \frac{c_2\,d_k^{d_k/2+1}}{n\,D}.
\]
Taking the derivative with respect to $d_k$ and setting to
zero:
\[
  -2c_1\,d_k^{-3}
  + \frac{c_2}{nD}\cdot d_k^{d_k/2}
  \left(\frac{d_k}{2}\log d_k + 1\right) = 0.
\]
For large $n$, the variance term is negligible unless
$d_k^{d_k/2}$ is large, which occurs for $d_k\gtrsim\log n$.
The balance point gives $d_k^* = O(\log n)$, with the
precise constant determined by the smoothness parameter $d$
of the regression function. Substituting back:
$H^* = D/d_k^* = O(D/\log n)$.
\end{proof}

\begin{remark}[Implications for architecture design]
\label{rem:arch_design}
Theorem~\ref{thm:optimal_arch} provides several important
architectural insights:

\emph{(i) Head dimension should grow slowly with data size:}
$d_k^*\asymp(\log n)^{2/(4+d)}$ suggests that the optimal
per-head dimension grows very slowly (logarithmically) with
the number of training examples. This is consistent with
the empirical observation that successful Transformers use
relatively small $d_k$ (e.g., $d_k=64$ in BERT and GPT)
even when trained on billions of tokens.

\emph{(ii) More heads are better under fixed budget:}
Given a fixed $D$, the optimal solution favors many heads
with small $d_k$ over few heads with large $d_k$, provided
the heads can be decorrelated. This is the statistical
justification for the design philosophy of using many
small heads rather than one large head.

\emph{(iii) The budget $D$ should scale with data:}
For fixed $(H^*, d_k^*)$, the MSE decays at the NW rate
$n^{-4/(4+d_k^*)}$. To achieve MSE $\leq\varepsilon$, one
needs $D = O(n^{2/(4+d)}\varepsilon^{-(4+d)/4})$ total
key dimension, suggesting that the total model dimension
should scale with the data-generating complexity $d$.
\end{remark}

%% ==========================================================
\section{Structured Weighting of Heads: Beyond Uniform Averaging}
\label{sec:structured_weighting}
%% ==========================================================

Standard MHA uses uniform weights $\alpha_h=1/H$.
The general weighting theory of \citet{fokoue2025weighting}
suggests that structured (non-uniform) weights can improve
upon uniform averaging when base learners are ordered by
quality. We now apply this insight to multi-head attention.

\begin{theorem}[Geometric Head Weighting Dominates Uniform Averaging]
\label{thm:spectral_bound}
Suppose the $H$ heads are ordered by their individual NW
consistency rates: $\overline{\MSE}_1^{(1)} \leq
\overline{\MSE}_1^{(2)} \leq \cdots \leq
\overline{\MSE}_1^{(H)}$ (head 1 is the best single-head
estimator, head $H$ the worst). Under the orthogonality
condition of Theorem~\ref{thm:decorrelation}, the geometric
weighting scheme $\alpha_h \propto \rho^{h-1}$ for decay
parameter $\rho\in(0,1)$ achieves:
\begin{equation}
\label{eq:geometric_mse}
  \overline{\MSE}_H^{\mathrm{geo}}(\rho) \leq
  \overline{\MSE}_H^{\mathrm{uniform}}
  \cdot (1 + O((\rho-1)^2)),
\end{equation}
with equality when all heads have identical MSE. For heads
with heterogeneous quality (spread $\Delta V = \max_h
\overline{V}_h - \min_h\overline{V}_h > 0$), there exists
$\rho^*\in(0,1)$ such that:
\[
  \overline{\MSE}_H^{\mathrm{geo}}(\rho^*)
  < \overline{\MSE}_H^{\mathrm{uniform}}.
\]
\end{theorem}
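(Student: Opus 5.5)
The plan is to reduce everything to a one-parameter curve in the weight simplex and expand it around the uniform point. Under the orthogonality condition of Theorem~\ref{thm:decorrelation}, Lemma~\ref{lem:cov_bound} gives $C_{hh'}=0$ for $h\neq h'$. Theorem~\ref{thm:ensemble_decomp}, integrated against $p_X$, then reduces to
\[
  F(\alpha) \coloneqq \overline{\MSE}_H(\alpha) = \int\Bigl(\sum_h \alpha_h B_h(\mathbf{x})\Bigr)^2 p_X(\mathbf{x})\,d\mathbf{x} + \sum_h \alpha_h^2\,\overline{V}_h ,
\]
which is a convex quadratic in $\alpha$ on the simplex. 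Set $\alpha_h(\rho)=\rho^{h-1}(1-\rho)/(1-\rho^H)$. As $\rho\to 1$ this extends smoothly to $\alpha_h(1)=1/H$, so $g(\rho)\coloneqq F(\alpha(\rho))$ is smooth near $\rho=1$ and $g(1)=\overline{\MSE}_H^{\mathrm{uniform}}$. I will Taylor-expand $g$ at $\rho=1$ and study the first derivative.

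A direct computation gives $\alpha_h'(1) = -\bigl(h-\tfrac{H+1}{2}\bigr)/H$. This is a centered, strictly decreasing tangent direction: it shifts mass toward low-index, i.e.\ better, heads. The chain rule then gives
\[
  g'(1) = \tfrac{2}{H}\sum_h \alpha_h'(1)\,\bigl(\overline{V}_h + \langle B_h,\bar B\rangle\bigr),
  \qquad \bar B = \tfrac1H\sum_{h'} B_{h'}.
\]
The sequence $\overline{V}_h+\langle B_h,\bar B\rangle$ is the per-head marginal cost. The ordering hypothesis $\overline{\MSE}_1^{(1)}\le\cdots\le\overline{\MSE}_1^{(H)}$ should make it nondecreasing in $h$, at least in the variance-dominated regime where biases are comparable across heads, since then $\overline{\MSE}^{(h)}_1\approx \overline{V}_h + \text{const}$.

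A decreasing centered sequence paired with a nondecreasing one has nonpositive sum (Chebyshev's sum inequality), so $g'(1)\le 0$. Equality holds iff the marginal costs are constant. Hence, approaching from $\rho<1$, $g(\rho)=g(1)-g'(1)(1-\rho)+O((1-\rho)^2)$ with $-g'(1)(1-\rho)\le 0$. This yields $g(\rho)\le g(1)\bigl(1+O((\rho-1)^2)\bigr)$, where the quadratic remainder is bounded by the Hessian of $F$ (namely $2\,\mathrm{diag}(\overline V_h)$ plus the bias Gram matrix) times $|\alpha'(1)|^2$, divided by $g(1)>0$. If all heads are identical, the marginal costs are constant, $g'(1)=0$, and the uniform point is the constrained minimizer of $F$. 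A perturbation then costs exactly second order, which is the sense of the stated equality case.

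For the strict claim, take $\Delta V>0$. If the marginal costs are nonconstant, Chebyshev's inequality is strict, so $g'(1)<0$, and $g(\rho^*)<g(1)$ for some $\rho^*$ close enough to $1$ that the linear term beats the remainder. An explicit admissible threshold is $1-\rho^* \asymp |g'(1)|/\sup|g''|$.

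The main obstacle is the link between the hypotheses and the monotonicity of the marginal costs. The ordering is stated on single-head MSEs and the spread on variances, but $g'(1)$ involves $\overline V_h+\langle B_h,\bar B\rangle$. The cross-term $\langle B_h,\bar B\rangle$ need not be ordered like $(B_h)^2$. My first attempt will use the asymptotic forms \eqref{eq:bias_single}--\eqref{eq:var_single}: all heads share $d_k$ and hence $h=1/\sqrt{d_k}$, so the bias differences are $O(h^2)$ perturbations, while the variance differences of order $1/(nh^{d_k})$ drive the ordering. That gives strict monotonicity up to a negligible error whenever $\Delta V$ dominates the bias spread. Failing that, I will add the mild assumption that $h\mapsto \overline V_h+\langle B_h,\bar B\rangle$ is nondecreasing, which is implied in the natural regime.
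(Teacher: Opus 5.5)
The step you flag as the main obstacle is a genuine gap, and no argument can close it: at the level of generality you and the paper both work at (zero cross-covariances, head biases and variances otherwise unrestricted), the theorem is false as stated.

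First, a correction to your gradient. At the uniform point $\partial F/\partial\alpha_h = 2\overline V_h/H + 2\langle B_h,\bar B\rangle$, so the quantity whose ordering matters is $c_h = \overline V_h + \langle B_h,\sum_j B_j\rangle$. Your expression is missing a factor $H$ on the bias cross-term.

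Now take $H=3$, constant biases $B_h\equiv\beta_h$ with $(\beta_1,\beta_2,\beta_3)=(-1,-2,2)$, and $(\overline V_1,\overline V_2,\overline V_3)=(v,v,v+\epsilon)$, where $v\ge 0$ and $0<\epsilon<4/9$. The single-head MSEs $1+v\le 4+v\le 4+v+\epsilon$ are correctly ordered and $\Delta V=\epsilon>0$, yet $c=(v+1,\,v+2,\,v+\epsilon-2)$ is not monotone. Drop the nonnegative term $v\bigl(\sum_h\alpha_h^2-\tfrac13\bigr)$ and write $S=1+\rho+\rho^2$. A direct computation then gives
\[
g(\rho)-g(1)\;\ge\;\frac{1-\rho}{9S^2}\Bigl[(2+7\rho)(4+7\rho-5\rho^2)-\epsilon(2\rho+1)(4\rho^2+\rho+1)\Bigr]\;\ge\;\frac{(1-\rho)(8-18\epsilon)}{9S^2}\;>\;0
\]
for every $\rho\in(0,1)$. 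So uniform averaging strictly beats every geometric weighting. The excess is first order in $1-\rho$, which violates \eqref{eq:geometric_mse} as well. The mechanism is that heads 2 and 3 have cancelling biases, and geometric weighting strips mass from the individually worst head 3 while leaving head 2's weight unchanged to first order. The equality clause fails too: $\beta=(1,1,-1)$ with equal variances gives identical single-head MSEs but $g(\rho)-g(1)=\tfrac49(1-\rho)+O((1-\rho)^2)$.

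Hence your fallback hypothesis is not a mild convenience but a necessary amendment. You need $h\mapsto c_h$ nondecreasing, and nonconstant for the strict claim. The condition $\Delta V>0$ does not give nonconstancy. For $H=2$ with $B_1\equiv 1$, $B_2\equiv 0$ and $\overline V=(v,1+v)$, one has $\Delta V=1$ but $c_1=c_2$. The uniform point is then the exact minimizer of the convex $F$, and no $\rho^*$ exists.

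Your asymptotic route does not supply the monotonicity either. Here the bandwidth $1/\sqrt{d_k}$ is fixed by the architecture. As $n\to\infty$ the variances, of order $d_k^{d_k/2}/n$, vanish, while the biases and their differences stay of order $1/d_k$. The large-$n$ regime is therefore bias-dominated, which is exactly where $\langle B_h,\sum_j B_j\rangle$ decides the ordering. Only for $H=2$ does the stated ordering suffice, because then $c_2-c_1=\langle B_2-B_1,B_1+B_2\rangle+\overline V_2-\overline V_1=\overline{\MSE}_1^{(2)}-\overline{\MSE}_1^{(1)}$ exactly.

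Under the amended hypothesis your expansion is a valid proof once two compensating sign slips are fixed. Since $\alpha_1=1/(1+\rho+\cdots+\rho^{H-1})$ decreases in $\rho$, one has $\alpha_h'(1)=+\bigl(h-\tfrac{H+1}{2}\bigr)/H$. Chebyshev then gives $g'(1)\ge 0$, from which $-g'(1)(1-\rho)\le 0$ follows. As written, your $g'(1)\le 0$ and your $-g'(1)(1-\rho)\le 0$ contradict each other.

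The paper's own proof defers to Theorem~4.1 of \citet{fokoue2025weighting} and adds only a qualitative variance-reduction remark. It never confronts the bias cross-terms, so it does not close this gap either.
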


\begin{proof}
This is a specialization of the main theorem of
\citet{fokoue2025weighting} (Theorem~4.1 therein) to the
case of NW base learners with orthogonal projections.
Under orthogonality, the cross-head covariance terms vanish
and the MSE is a function of $\{\alpha_h, \overline{V}_h,
\overline{B}_h\}$ alone. The analysis then follows the
spectral and geometric argument of
\citeauthor{fokoue2025weighting}'s framework: structured
weights that up-weight accurate heads and down-weight less
accurate ones can reduce the total variance
$\sum_h\alpha_h^2\overline{V}_h$ below the uniform value
$\overline{V}/(H)$, at the cost of a small bias increase.
The optimal $\rho^*$ is determined by the variance spread
$\Delta V$, exactly as in \citeauthor{fokoue2025weighting}.
\end{proof}

\begin{remark}[Fibonacci weighting for attention heads]
\citet{fokoue2025weighting} identify Fibonacci weighting
(where $\alpha_h\propto F_h$, the $h$-th Fibonacci number)
as a distinguished special case of geometric weighting that
achieves minimal geometric growth while preserving expressive
expansion. Theorem~\ref{thm:spectral_bound} implies that
Fibonacci-weighted multi-head attention can outperform
standard uniform MHA when heads are heterogeneous in quality
--- a testable architectural prediction.
\end{remark}

%% ==========================================================
\section{The Universal Ensemble Principle and the Trilogy}
\label{sec:trilogy}
%% ==========================================================

The results of this paper complete a trilogy of works
establishing a universal principle governing collective
intelligence in both biological and artificial systems.

\citet{fokoue2026ants} established that ant colonies and
Random Forests are isomorphic instances of:
\[
  \text{identical agents} + \text{diversity mechanism}
  \to \text{variance reduction} \to \text{emergent optimality},
\]
with the diversity mechanism being (respectively) stochastic
individual specialization and random feature subsampling.

\citet{fokoue2026ants2} established that ant adaptive
recruitment and AdaBoost are isomorphic instances of the
dual mechanism:
\[
  \text{adaptive weighting} + \text{margin maximization}
  \to \text{bias reduction} \to \text{emergent optimality}.
\]

The present paper establishes that multi-head attention is
the Transformer's instantiation of the first mechanism ---
with the diversity enforcer being the learned orthogonal
projections $\{\WK{h}\}$ rather than random feature
subsampling. Formally:

\begin{corollary}[Multi-Head Attention as Universal Ensemble Principle]
\label{cor:universal}
Multi-head attention achieves optimal MSE if and only if
its learned key projections satisfy the orthogonality
condition~\eqref{eq:orthogonality_condition}, which is
precisely the condition that the attention heads implement
maximum decorrelation among $H$ NW estimators in a
$D$-dimensional key space. This is the NW-kernel
instantiation of the universal principle:
\[
  H\text{ identical NW heads} + \text{orthogonal projections}
  \;\to\; \text{full variance reduction}
  \;\to\; \text{optimal MHA}.
\]
\end{corollary}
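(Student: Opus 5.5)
The plan is to prove Corollary~\ref{cor:universal} as an equivalence, using Theorems~\ref{thm:diversity} and~\ref{thm:decorrelation}. The optimality in question is the one those results control: optimality of the integrated MSE bound~\eqref{eq:mse_hdi_bound} over key projections $\{\WK{h}\}$ with $\|\WK{h}\|_F=1$, for fixed $H$, $d_k$, and weights $\alpha_h=1/H$. I will state explicitly that the equivalence is at the level of this bound, not the exact MSE. Lemma~\ref{lem:cov_bound} only gives an upper bound on $|C_{hh'}|$, so an exact ``only if'' cannot follow from what precedes it.

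\emph{Sufficiency.} Assume~\eqref{eq:orthogonality_condition} holds, which is feasible when $Hd_k\le p$. Then every cross-Gram matrix vanishes, $\bG_{hh'}=\bO$, so $\HDI=1$. By Lemma~\ref{lem:cov_bound}, every $C_{hh'}=0$. Theorem~\ref{thm:ensemble_decomp} then reduces to $\MSE_H=(\sum_h B_h/H)^2+\sum_h V_h/H^2$. This is the full $1/H$ variance reduction: with comparable heads it equals $\bar B^2+\bar V/H$. Bias and variance do not depend on the inter-head geometry, so no other admissible choice can make the variance-plus-covariance part smaller, provided the covariances are nonnegative. This provision is a gap: negatively correlated heads could in principle beat independence. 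I will handle it by restricting to the bound of Theorem~\ref{thm:diversity}, where the covariance contribution is $\propto(1-\HDI)\ge 0$.

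\emph{Necessity.} Suppose some pair $h\ne h'$ has $(\WK{h})^\top\WK{h'}\neq\bO$. Then $\|\bG_{hh'}\|_F^2>0$, hence $\HDI<1$. The covariance term in~\eqref{eq:mse_hdi_bound} is strictly positive whenever $L^2\bar C>0$. It is also strictly larger than its value at the orthogonal configuration, which Theorem~\ref{thm:decorrelation} shows attains the minimum value zero. Since the bias and variance terms depend on each $\WK{h}$ only through its own projected density and not through inter-head angles, the non-orthogonal configuration has a strictly larger bound. This gives the ``only if''. The step I expect to be the main obstacle is this independence claim. $p_K(\WK{h}\mathbf{x})$ genuinely depends on $\WK{h}$, so I would fix the individual subspace geometry up to rotation (e.g.\ an isotropic design, making $p_K$ rotation invariant). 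Under that assumption, rotating one head to be orthogonal to the others leaves $B_h,V_h$ unchanged and only removes covariance.

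Finally, I will identify the equality case with the Random-Forest formula of \citet{fokoue2026ants} quoted in the remark after Theorem~\ref{thm:diversity}. Taking $\rho\leftrightarrow 1-\HDI$, the minimum is attained exactly at $\rho=0$. This is the ``full variance reduction'' arrow of the displayed principle and completes the interpretation.
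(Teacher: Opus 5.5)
Your route matches the paper's. The corollary has no proof of its own and rests on Theorem~\ref{thm:decorrelation}. That proof (via Theorem~\ref{thm:diversity} and Lemma~\ref{lem:cov_bound}) asserts that minimizing $\overline{\MSE}_H$ ``amounts to'' minimizing $\sum_{h<h'}\|\bG_{hh'}\|_F^2$, whose zero set is exactly the orthogonal configurations when $Hd_k\le p$. This gives both directions, but only for the bound~\eqref{eq:mse_hdi_bound}, and only by silently treating $\bar B^2$, $\bar V$, $\bar C$ as independent of $\{\WK{h}\}$. You are right to restrict to the bound. The exact-MSE claim is in fact false in general: a head's bias depends on how $\Range(\WK{h})$ is aligned with the directions in which $m$ varies. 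Under a Gaussian design, a head whose range is orthogonal to them estimates the constant $\EE[m(X)]$. So forcing mutual orthogonality can cost an $O(1)$ projection bias that no $1/H$ variance saving repays once $n$ is large.

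The genuine gap is the step you flag yourself, and you need it in both directions, since your sufficiency claim that ``no other admissible choice'' does better also uses it. The claim is that the non-covariance part of the bound is the same for every admissible configuration. Rotation invariance of $p_K$ does not deliver this, for two reasons.

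\emph{Per-head terms.} By the alignment effect above, $\int B_h^2p_X$ changes when $\WK{h}$ is rotated unless $m$ is rotation invariant together with $p_X$; likewise $\bar V_h$ needs $\sigma^2$ invariant. The expansion~\eqref{eq:bias_single} hides this because it omits the projection error $\EE[m(X)\mid(\WK{h})^\top X=(\WK{h})^\top\mathbf{x}]-m(\mathbf{x})$.

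\emph{Cross terms.} Even if every per-head quantity were invariant, the ensemble bias $\int(\frac1H\sum_hB_h)^2p_X$ contains terms $\frac{1}{H^2}\int B_hB_{h'}p_X$. These depend on the pair $(\WK{h},\WK{h'})$, hence in general on their principal angles, and nothing forces them to be smallest at $\HDI=1$. So rotating one head to be orthogonal to the others does not ``only remove covariance.''

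There are two ways to close the argument. First, you can state invariance of $\bar B^2,\bar V,\bar C$ over the admissible set as an explicit hypothesis; this is exactly what Theorem~\ref{thm:decorrelation} assumes tacitly. Necessity is then immediate from $1-\HDI=\frac{2}{H(H-1)}\sum_{h<h'}\|\bG_{hh'}\|_F^2>0$ and $L^2\bar C>0$. Second, you can replace the ensemble bias by its Jensen bound $\frac1H\sum_h\int B_h^2p_X$, which has no cross terms, and assume $(p_X,m,\sigma^2)$ jointly rotation invariant. Then every remaining term is constant over your set of per-head rotations. In either version, keep $Hd_k\le p$ among the hypotheses, because your necessity step compares against an orthogonal configuration that must exist.

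Finally, the ``full variance reduction'' arrow, read for $\MSE_H$ itself, rests on Lemma~\ref{lem:cov_bound}'s exact $C_{hh'}=0$, which its proof only argues asymptotically. In the NW reading both heads average the same responses with positive softmax weights, so the noise alone contributes $\EE[\sum_i w_i^{(h)}w_i^{(h')}\sigma^2(\mathbf{x}_i)]>0$ to $C_{hh'}$.
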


The ant colony, the Random Forest, and the Transformer
are three realizations of the same mathematical truth.

%% ==========================================================
\section{Discussion and Open Problems}
\label{sec:discussion}
%% ==========================================================

\subsection{Summary}

We have developed a rigorous statistical theory of
multi-head attention as an ensemble of Nadaraya-Watson
estimators. The central results are: (1) an exact
Bias-Variance-Covariance decomposition of the MHA MSE;
(2) the Head Diversity Index as a computable measure of
inter-head decorrelation; (3) the MSE monotone decreasing
in HDI; (4) optimality of orthogonal projections; (5) the
architectural scaling law $(H^*, d_k^*)$; and (6) the
superiority of structured (geometric/Fibonacci) weighting
over uniform averaging for heterogeneous heads.

\subsection{Open Problems}

The following questions arise naturally from our analysis:

\begin{enumerate}
  \item \textbf{Learned vs.\ fixed projections:}
    Our optimality results characterize the optimal
    \emph{fixed} projections. Understanding the
    \emph{training dynamics} --- whether gradient descent
    converges to orthogonal projections --- is an open
    question connecting to the implicit bias literature.

  \item \textbf{Exact covariance computation:}
    Lemma~\ref{lem:cov_bound} provides an upper bound on
    inter-head covariance. The exact covariance under
    Gaussian data admits a closed form (via Stein's lemma)
    that we conjecture is tight.

  \item \textbf{Non-i.i.d.\ token sequences:}
    The analysis assumes i.i.d.\ key-value pairs. Extension
    to $\phi$-mixing sequential token sequences
    (as in our companion paper) will require mixing-adjusted
    covariance bounds --- an important direction for LLM theory.

  \item \textbf{Deep Transformers:}
    Our results apply to a single MHA layer. Multi-layer
    composition is not yet understood statistically; the
    variance and covariance structure across layers may
    interact in complex ways.

  \item \textbf{Empirical validation of HDI:}
    A natural empirical question is whether the HDI of
    trained Transformers correlates with downstream
    generalization performance. If so, HDI could serve as
    a diagnostic tool for architecture design.
\end{enumerate}

\section*{Acknowledgments}

The author dedicates this work to the memory of
\textbf{Donald Michael (Mike) Titterington (1945--2023)},
whose pioneering work on mixture models, ensemble methods,
and statistical learning planted the mathematical seeds of
everything in this paper. The present work is Part~III of
a series on the universal principles of ensemble
intelligence, following \citet{fokoue2026ants,fokoue2026ants2}.
The statistical genealogy of multi-head attention as a
manifestation of the decorrelation principle that governs
ant colonies and Random Forests is developed within the
broader ten-pillar framework of \citet{fokoue2026tas}.

\bibliographystyle{plainnat}
\bibliography{mha_ensemble_nw}

\end{document}